\let\oldnl\nl
\newcommand{\nonl}{\renewcommand{\nl}{\let\nl\oldnl}}
\newcommand{\cmark}{\ding{51}}
\newcommand{\xmark}{-}%
\definecolor{light-gray}{gray}{0.95}
\newtheorem{theorem}{Theorem}
\newtheorem{lemma}[theorem]{Lemma}
\theoremstyle{definition}
\newcommand*{\ngt}{g}
\newcommand*{\nd}{d}
\newcommand*{\nmin}{m}
\definecolor{tabblue}{RGB}{31, 93, 180}
\definecolor{taborange}{RGB}{255, 127, 14}
\begin{document}

\title[Article Title]{Quantitative Evaluation of Motif Sets in Time Series}

\author*[1,2]{\fnm{Daan} \sur{Van Wesenbeeck} \orcidlink{0000-0002-4941-5480}}\email{daan.vanwesenbeeck@kuleuven.be}
\author[1,2]{\fnm{Aras} \sur{Yurtman} \orcidlink{0000-0001-6213-5427}}
\author[1,2]{\fnm{Wannes} \sur{Meert} \orcidlink{0000-0001-9560-3872}}
\author[1,2]{\fnm{Hendrik} \sur{Blockeel} \orcidlink{0000-0003-0378-3699}}

\affil[1]{\orgdiv{Dept. of Computer Science}, \orgname{KU Leuven}, \orgaddress{\city{Leuven}, \postcode{B-3000}, \country{Belgium}}}
\affil[2]{\orgdiv{Leuven.AI - KU Leuven Institute for AI}, \city{Leuven}, \postcode{B-3000}, \country{Belgium}}

\abstract{

    Time Series Motif Discovery (TSMD), which aims at finding recurring patterns in time series, is an important task in numerous application domains, and many methods for this task exist. These methods are usually evaluated qualitatively. A few metrics for quantitative evaluation, where discovered motifs are compared to some ground truth, have been proposed, but they typically make implicit assumptions that limit their applicability. This paper introduces PROM, a broadly applicable metric that overcomes those limitations, and TSMD-Bench, a benchmark for quantitative evaluation of time series motif discovery. Experiments with PROM and TSMD-Bench show that PROM provides a more comprehensive evaluation than existing metrics, that TSMD-Bench is a more challenging benchmark than earlier ones, and that the combination can help understand the relative performance of TSMD methods. More generally, the proposed approach enables large-scale, systematic performance comparisons in this field.
    
}

\keywords{Evaluation metrics, benchmark datasets, time series, motif discovery}

\maketitle

\section{Introduction}\label{sec:intro}

Time series motif discovery (TSMD) has emerged as a fundamental time series task. It is useful in a wide variety of domains, such as medicine \citep{tsmd_medicine}, robotics \citep{tsmd_robotics}, seismology \citep{tsmd_seismo}, audio processing \citep{tsmd_audio}, and so on.
It consists of identifying recurring patterns, in the form of segments that are similar to each other. Each such segment represents an occurrence of the pattern. In this paper, we call one such segment a \emph{motif}, and the set of all occurrences of a pattern a \emph{motif set}. 

Many different methods for TSMD have been proposed.  Their performance is typically evaluated in a qualitative manner, by having a domain expert interpret the discovered motifs. While this is useful, it does not allow for systematic large-scale performance comparisons between methods. A variety of quantitative methods have occasionally been used as well, but they typically assume a specific setting of TSMD and have limitations that preclude their use in other settings.  

In this paper, we propose PROM, a broadly applicable method for quantitative evaluation of TSMD. It works in the following setting: given a ``ground truth'' consisting of one or more motif sets, and a set of motif sets discovered by a TSMD method, evaluate how well the discovered motif sets approximate the ground truth.  Figure~\ref{fig:example} illustrates this setting. 
It shows a time series in which two ground-truth motif sets can be identified.  A TSMD method has returned three discovered motif sets, which should be evaluated by comparing them to the ground-truth motif sets. The evaluation should take into account the accuracy with which individual motifs are found, as well as the extent to which the discovered motif sets match with the ground-truth motif sets. In the example, one discovered motif set is entirely extraneous, another contains false-positive motifs: both are undesired and should lead to a lower score. 

\begin{figure}
    \centering
    \includegraphics[width=\textwidth]{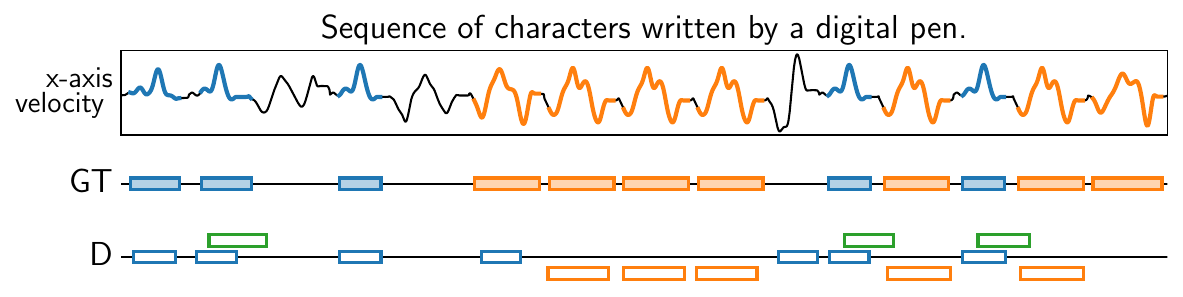}
    \caption{The ground-truth (GT) and discovered (D) motif sets in an example time series that represents a sequence of characters written by a digital pen \citep{misc_character_trajectories_175}, where the goal is to retrieve the occurrences of each character that is repeated. A \emph{quantitative evaluation} scores how well the discovered motif sets correspond to the GT motif sets. In this example, the orange discovered motif set is incomplete with respect to the orange GT motif set, the blue discovered motif set has excess motifs, and the green one is redundant given the blue one.}
    \label{fig:example}
\end{figure}

Existing methods for quantitative evaluation often (implicitly) make assumptions such as: all motifs have the same length; the ground truth contains exactly one motif set; motif sets contain exactly two motifs; false-positives motifs need not be penalized. All these assumptions are violated in this example case. The evaluation criterion proposed here is quite simple and in fact has already been used at least once \citep{locomotif}, but without extensive motivation or explanation of its properties.  We here study this criterion in detail and show that it enables systematic large-scale quantitative evaluation of TSMD methods.

Of course, the proposed type of evaluation requires benchmark time series in which the ground-truth motif sets are known. Since real, expert-annotated datasets are scarcely available, existing studies often use synthetic time series datasets, typically created by inserting short time series instances with similar shape into random-walk data. But retrieving the motifs in such time series is often trivial, because it boils down to distinguishing random from non-random data. This arguably makes these synthetic TSMD tasks less representative of realistic TSMD tasks.
Therefore, as a second contribution, we propose TSMD-Bench: a set of synthetic datasets that are entirely constructed from real time series datasets and should therefore be more representative for real TSMD tasks.

The proposed evaluation metric and the benchmark are separate and independent contributions, which may be useful on their own.\footnote{Source code for PROM and TSMD-Bench are available at \texttt{\url{https://github.com/ML-KULeuven/tsdm-evaluation}}.} Combined, they allow for easy and effective evaluation of new TSMD methods.

\section{Problem Statement}\label{sec:problem}

Before stating our goal more precisely, we introduce some notation that will be used throughout the paper.

A \emph{time series} $\mathbf{x}$ is defined as a sequence of $n$ samples from a feature space $\mathcal{F}$: $\mathbf{x} = [x_1, x_2, \dots, x_n]  \quad \text{where} \quad x_i \in \mathcal{F} \ \text{for} \ i = 1, \dots, n$. $\mathcal{F} = \mathbb{R}$ for univariate time series and $\mathcal{F} = \mathbb{R}^D$ for multivariate time series with $D$ dimensions. A \emph{segment} $\alpha$ of a time series $\mathbf{x}$ that starts at index $b$ and ends at index $e$ is denoted as $\alpha = [b:e] = [b, b+1, \dots, e] \quad  \text{where} \quad  1 \leq b \leq e \leq n.$ The \emph{length} of $\alpha$ is denoted as $|\alpha|$ and is equal to $e-b+1$. 

The \emph{ground truth (GT)} motif sets represent the patterns of interest. They are denoted as $\mathcal{G} = \{ \mathcal{G}_{1}, \dots, \mathcal{G}_{\ngt} \}$ where each $\mathcal{G}_{i}$ is a motif set of $k_i$ non-overlapping segments: $\mathcal{G}_{i} = \{\beta_{i, 1}, \dots, \beta_{i, k_i} \}$ for $i=1,\dots,\ngt$. Also segments from different motif sets cannot overlap, i.e., $\beta_{i,j} \cap \beta_{i', j'} = \emptyset$ except when $(i,j)=(i',j')$.  

We assume that the TSMD method that we want to evaluate has returned a set of motif sets $\mathcal{M} = \{ \mathcal{M}_{1}, \dots, \mathcal{M}_{\nd} \}$ where $\mathcal{M}_{j} = \{\alpha_{j, 1}, \dots, \alpha_{j, l_j} \}$ for $j=1, \dots, \nd$. In contrast to the ground-truth motifs, $\mathcal{M}$ may contain overlapping segments.  $\mathcal{M}$ may contain more or fewer motif sets than $\mathcal{G}$, i.e., $\nd$ can be different from $\ngt$. Individual motif sets may contain any number of motifs, which may differ in length.

The task we address is now: given $\mathcal{G}$ and $\mathcal{M}$, describe how well $\mathcal{M}$ approximates $\mathcal{G}$.  Ideally, the evaluation criterion $c(\mathcal{G},\mathcal{M})$ should have the following properties: it is maximal when $\mathcal{G} =\mathcal{M}$, and it is able to penalize: (1) the discovery of motif sets that cannot be matched with ground-truth motif sets (false positives) and vice versa, the existence of ground-truth motif sets that cannot be matched with discovered motif sets (false negatives); (2) discovered motifs in a motif set that cannot be matched with motifs in the matching ground-truth motif set (false positive motifs) and vice versa (false negative motifs). 

\section{Related Work}\label{sec:rw}
\noindent The evaluation methodologies used for TSMD vary considerably across the literature. While it is not the main focus of this paper, we first touch upon qualitative evaluation, as it is the most commonly used.

\subsection{Qualitative Evaluation}
In most qualitative evaluations, one or more TSMD methods are applied to a real time series, and the authors (or ideally domain experts) assign a semantic meaning to the obtained motifs \citep{mp2, mrmotif, mueen_enumeration_nodate, furtado_silva_elastic_2018, alaee_matrix_2020}. Such evaluations are useful to gain new insights about the application domain. In other qualitative evaluations, it is roughly known what the motifs of interest look like (or where they occur in the time series), and the authors simply verify whether they are among the motifs found by the TSMD method \citep{mstamp, motiflets}. While both types of qualitative evaluation demonstrate the practical value of the applied methods, their reliance on human expertise means that they are somewhat subjective, and that they do not scale well to extensive experiments (in which multiple methods are applied to multiple datasets). Quantitative evaluation solves both these issues.


\noindent \subsection{Quantitative Evaluation} 
\begin{table}
\renewcommand{\arraystretch}{1.4}
\newcommand{\rot}[1]{\rotatebox[origin=c]{90}{#1}}
\begin{adjustbox}{width=\textwidth}
\begin{tabular}{p{1.25cm}p{3cm}>{\centering}p{0.5cm}>{\centering}p{0.5cm}>{\centering}p{0.5cm}>{\centering}p{0.5cm}>{\centering}p{0.5cm}>{\centering}p{0.5cm}>{\centering}p{0.5cm}>{\centering}p{0.5cm}|}
&                                                                            & \multicolumn{3}{c}{Setting}       & \multicolumn{4}{c}{Penalizes} \\ \cmidrule(l){3-5} \cmidrule(l){6-9}
Evaluation Metric                    & Reference                             & \rot{\makecell{Variable \\ length}} & \rot{\makecell{Multiple \\motif sets}} & \rot{\makecell{Motif sets of \\ any cardinality}} & \rot{\makecell{False-positive \\ sets}} &  \rot{\makecell{False-negative \\ sets}} & \rot{\makecell{False-positive \\ motifs}} &  \rot{\makecell{False-negative \\ motifs}} \tabularnewline \midrule \\
Overlap-based metrics                & \citet{gao_exploring_2018}            & \cmark          & \cmark              & \xmark                        & \xmark          &  \cmark          & \xmark         &  \cmark  \tabularnewline      
                                     & \citet{nunthanid_parameter-free_2012} & \cmark          & \cmark              & \xmark                        & \xmark          &  \cmark          & \xmark         &  \cmark  \tabularnewline  
                                     & \citet{correctness}                   & \cmark          & \cmark              & \cmark                        & \xmark          &  \xmark          & \xmark         &  \cmark  \tabularnewline                
Score                                & \citet{setfinder}                     & \xmark          & \cmark              & \cmark                        & N/S             & N/S              & \cmark         &  \cmark  \tabularnewline          
Precision and Recall                 & \citet{setfinder, guiding, motiflets} & \xmark          & \xmark              & \cmark                        & N/A             & N/A              & \cmark         &  \cmark  \tabularnewline               
PROM                                 &                                       & \cmark          & \cmark              & \cmark                        & \cmark          & \cmark           & \cmark         &  \cmark  \tabularnewline  \bottomrule    
\end{tabular}
\end{adjustbox}
\caption{Overview of existing TSMD evaluation metrics, which TSMD setting they are defined for, and whether they have the properties listed in the Problem Statement (Section~\ref{sec:problem}). N/A: Not Applicable, N/S: Not Specified.}\label{tab:overview_metrics}
\end{table}
\subsubsection{Existing Evaluation Metrics} 
A small fraction of existing studies use quantitative evaluation metrics: Table \ref{tab:overview_metrics} provides an overview. Since these metrics are introduced together with a specific TSMD method, they are often defined for specific TSMD settings: Some assume fixed-length motifs, a single discovered motif set, or motif pairs (motif sets with exactly two motifs). Others do not penalize the undesired aspects mentioned in Section~\ref{sec:problem}. \\

Several metrics measure the overlap between the discovered motifs and the GT motifs. \citet{gao_exploring_2018} and \citet{nunthanid_parameter-free_2012} define such metrics for motif pairs; \citet{correctness} do so for motif sets of any cardinality. Since they focus solely on the overlap with the GT motifs, they do not penalize false-positive motifs, nor false-positive sets.

The ``score" metric \citep{setfinder} measures differences in time rather than overlap. It measures the differences between the start points of the GT and discovered motifs, and penalizes false-negative motifs and false-positive motifs by adding their length to the score value. The metric is not suitable for evaluating motifs that can considerably differ in length: for a given GT motif, a discovered motif that starts at the same index is considered a perfect match, even if it ends at a completely different index.

The above metrics are applied in the context of multiple GT and discovered motif sets in their respective paper. Since TSMD is an unsupervised task, a method does not explicitly associate a specific discovered motif set with a specific GT motif set. Therefore, these metrics first match every GT motif set with a different discovered motif set, and then aggregate the individual values to obtain a final performance value.
This matching is obtained either in a greedy manner \citep{nunthanid_parameter-free_2012, gao_exploring_2018}, or optimally (such that the final performance value is optimized) \citep{setfinder}. 

Other studies define precision and recall for TSMD \citep{guiding, motiflets, setfinder}, but only for a single GT and discovered motif set that contain fixed-length motifs. First, every discovered motif is \emph{matched} to a GT motif if it is
sufficiently similar in time.\footnote{The definition of ``sufficiently'' differs among studies: sometimes the motifs have to overlap more than half their lengths, sometimes a single common time index is enough.} Then, precision is defined as the fraction of matched motifs among the discovered motifs, and recall as the fraction of matched motifs among the GT motifs. 

{This approach is not trivial to extend to multiple GT and discovered motif sets. In this case, a GT motif can be sufficiently similar to multiple discovered motifs, each belonging to a different set; and it is not trivial to decide to which one the GT motif should be attributed to. When the motifs have different lengths, this decision becomes even more complex. Our evaluation method PROM addresses these issues.}

\subsubsection{Used Datasets}\label{sec:related_work_used_datasets} 
Existing studies often use synthetic time series data to evaluate methods. These synthetic time series are constructed by inserting instances from a classification dataset into random-walk data, under the assumption that instances from the same class are similar in shape \citep{nunthanid_parameter-free_2012, correctness, setfinder}. As argued in the introduction (Section~\ref{sec:intro}), this approach leads to unrealistically easy TSMD tasks.
 
To evaluate TSMD methods on real instead of synthetic time series data, some studies use expert-annotated segmentation datasets \citep{van_leeuwen_normalization_2023, mueen_enumeration_nodate, mstamp}, because these are more common due to requiring less annotation effort. In this case, the GT labels consist of a partitioning of a time series into GT segments, where each segment corresponds to a certain ``state" of the process that generates the time series (e.g., a walking activity in a human activity recognition dataset). Such GT labels can be used to evaluate whether a discovered motif set is ``meaningful", by checking whether all its motifs are located within GT segments of the same state \citep{van_leeuwen_normalization_2023}. However, they cannot be used to directly evaluate TSMD, since the GT segments do not each correspond to an occurrence of a GT pattern.

Contrary to the datasets used in the literature, the benchmark time series we propose have GT motifs, and are constructed exclusively using real time series data, which arguably makes them more representative of realistic TSMD tasks.

\section{PROM: Precision-Recall under Optimal Matching}\label{sec:metric}
The metric we propose is called PROM: Precision-Recall under Optimal Matching.  Recall that we want to compare a set of discovered motif sets with a set of ground-truth motif sets.
PROM is calculated in three consecutive steps (see also Figure~\ref{fig:overview}): (1) discovered motifs are matched on an individual basis with GT motifs (Figure \ref{fig:uncolored_matching_segments}); (2) based on the result of step 1, discovered motif {\em sets} are matched optimally with GT motif sets (Figure \ref{fig:matching_segments}), which results in a {\em matching matrix} (Figure~\ref{fig:mm}); (3) notions of precision, recall, and derived measures such as F1 are defined based on this matching matrix (Figure~\ref{fig:pr}).   We next describe each of these steps in more detail.

\begin{figure}
\begin{subfigure}[t]{\linewidth}
    \centering
    \includegraphics[width=1\linewidth]{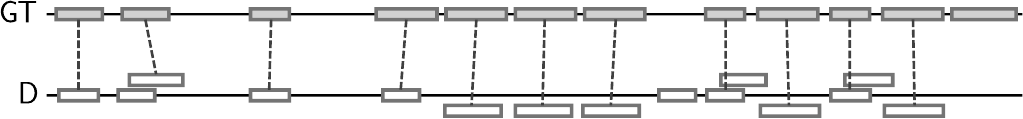}
    \caption{First, the discovered motifs are matched to the GT motifs based on their overlap in time}
    \label{fig:uncolored_matching_segments}
\end{subfigure}\hfill
\begin{subfigure}[t]{\linewidth}
    \centering
    \includegraphics[width=\linewidth]{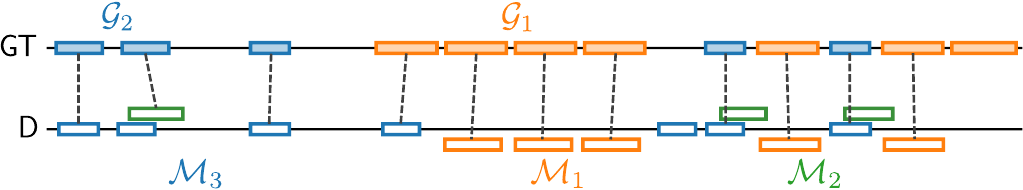}
    \caption{Second, the discovered motif sets are matched optimally with the GT motif sets based on the result of (a)}
    \label{fig:matching_segments}
\end{subfigure}\hfill
\begin{subfigure}[t]{0.5\linewidth}
    \centering
    \includegraphics[width=0.9\linewidth]{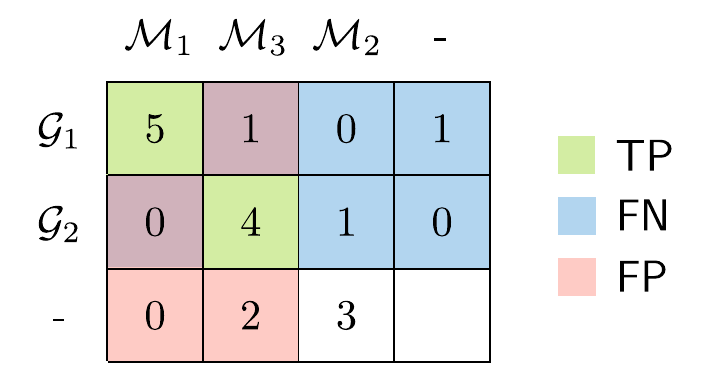}
    \caption{This results in a matching matrix $\mathbf{M}^{*}$}
    \label{fig:mm}
\end{subfigure}\hfill
\begin{subfigure}[t]{0.5\linewidth}
    \centering
    \includegraphics[width=0.85\linewidth]{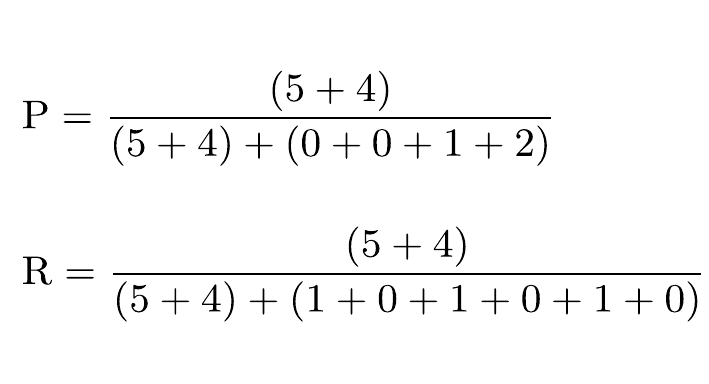}
    \caption{which can be used to calculate precision (P), recall (R), and F1-score} 
    \label{fig:pr}
\end{subfigure}\hfill
\caption{The evaluation process of PROM}
\label{fig:overview}
\end{figure}

\subsection{Matching individual motifs}
We define the {\em overlap rate} (OR) between two motifs $\alpha$ and $\beta$ as follows:
\begin{equation*}
    \text{OR}(\alpha, \beta) = \frac{|\alpha \cap \beta|}{|\alpha \cup \beta|}.
\end{equation*} 
We say that two segments $\alpha$ and $\beta$ are \emph{matchable} if $\text{OR}(\alpha, \beta) > 0.5$.  We then have the following property. \\

\begin{lemma}
Each discovered segment is matchable with at most one ground-truth segment.
\label{lemma:1}
\end{lemma}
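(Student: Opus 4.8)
The plan is to argue by contradiction: suppose some discovered segment $\alpha$ were matchable with two distinct ground-truth segments $\beta$ and $\beta'$, and derive a contradiction from the fact (stated in the Problem Statement) that ground-truth segments are pairwise non-overlapping, i.e.\ $\beta \cap \beta' = \emptyset$.

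The key step is to convert the threshold condition $\text{OR}(\alpha,\beta) > 1/2$ into a lower bound expressed purely in terms of $|\alpha|$. Since $\alpha \subseteq \alpha \cup \beta$, we have $|\alpha \cup \beta| \geq |\alpha|$, and therefore
\[
\text{OR}(\alpha,\beta) = \frac{|\alpha \cap \beta|}{|\alpha \cup \beta|} > \frac{1}{2}
\quad\Longrightarrow\quad
|\alpha \cap \beta| > \tfrac{1}{2}\,|\alpha \cup \beta| \geq \tfrac{1}{2}\,|\alpha|.
\]
The same reasoning applied to $\beta'$ gives $|\alpha \cap \beta'| > \tfrac{1}{2}\,|\alpha|$. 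Now, because $\beta \cap \beta' = \emptyset$, the two sets $\alpha \cap \beta$ and $\alpha \cap \beta'$ are disjoint subsets of $\alpha$, so $|\alpha \cap \beta| + |\alpha \cap \beta'| \leq |\alpha|$. Combining the two strict inequalities yields $|\alpha| < |\alpha \cap \beta| + |\alpha \cap \beta'| \leq |\alpha|$, a contradiction. Hence $\alpha$ is matchable with at most one ground-truth segment.

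There is no real obstacle here — the argument is a one-line counting estimate once the right quantity is isolated. The only thing to be slightly careful about is which denominator lower bound to use: bounding $|\alpha \cup \beta|$ from below by $|\alpha|$ (rather than by $|\beta|$) is what makes the two inequalities share a common reference quantity so that disjointness of $\beta$ and $\beta'$ can be invoked. I would also note in passing that the bound $1/2$ is tight: nothing stronger than "at most one" can be claimed, and the analogous statement need \emph{not} hold for discovered segments among themselves, since $\mathcal{M}$ may contain overlapping segments.
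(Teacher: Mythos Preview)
Your proof is correct and follows essentially the same route as the paper: argue by contradiction, use $|\alpha\cup\beta|\ge|\alpha|$ to obtain $|\alpha\cap\beta_i|>\tfrac12|\alpha|$ for both ground-truth segments, and conclude that the two ground-truth segments must overlap inside $\alpha$, contradicting their pairwise disjointness. The only cosmetic difference is that you spell out the final pigeonhole step explicitly via $|\alpha\cap\beta|+|\alpha\cap\beta'|\le|\alpha|$, whereas the paper simply states that two subsets each covering more than half of $\alpha$ must intersect.
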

\begin{proof}
By contradiction.  Assume a discovered segment $\alpha$ is matchable with two different ground-truth segments $\beta_1$ and $\beta_2$.  By definition, this implies
$\text{OR}(\alpha, \beta_i) = \frac{|\alpha \cap \beta_i|}{|\alpha \cup \beta_i|} > 0.5$, hence $|\alpha \cap \beta_i| > 0.5 |\alpha \cup \beta_i| \geq 0.5|\alpha|$ for $i=1,2$. So $\beta_1$ and $\beta_2$ both cover more than half of $\alpha$. This means they have to overlap, i.e., $|\beta_1\cap\beta_2| \neq \emptyset$, which contradicts the basic assumption that the ground-truth motifs do not overlap.
\end{proof}
Individual motifs are now matched as follows: for each GT segment $\beta$, find the discovered segment $\alpha$ with the highest $\text{OR}(\alpha, \beta)$. If it is greater than $0.5$, match $\beta$ with $\alpha$, otherwise, leave $\beta$ unmatched. Because of the above lemma, the result of this procedure does not depend on the order in which the ground-truth segments are processed: each $\alpha$ is matchable with at most one $\beta$, so one GT segment cannot ``steal" a candidate $\alpha$ from another GT segment, and the decision for one GT segment never interferes with the optimal decision for a later one. Further, since for each $\beta$ the most overlapping $\alpha$ is selected, the obtained matching maximizes the total $\text{OR}$ between the GT and the discovered segments. Note that since Lemma~\ref{lemma:1} holds for any OR-threshold between 0.5 and 1, a stricter OR-threshold can also be used.

\subsection{Matching motif sets, and the matching matrix}

Based on the obtained matching, we construct a contingency table $\textbf{M} \in \mathbb{N}^{\ngt\times\nd}$ in which cell $M_{i,j}$ indicates how often a segment from  GT motif set $\mathcal{G}_i$ was matched with a segment from discovered set $\mathcal{M}_j$.
The optimal matching between discovered motif sets and GT motif sets is then obtained by permuting the columns of $\textbf{M}$ so that the sum of the diagonal elements $M_{i,i}$ is maximized. This task reduces to a linear sum assignment problem and can be solved in $O(\max(g, d)^3)$ time using the Hungarian method \citep{Jonker1987ASA}. The resulting permutation $\pi$ is optimal in the sense that the motif sets are matched in such a way that the total number of matched segments inside them is maximal.

The matrix thus constructed is extended with one column containing for each GT motif set the number of segments not matched with any discovered segment, and a row containing for each discovered motif set the number of segments not matched with any GT motif.  This yields the {\em matching matrix} $\mathbf{M}^{*} \in \mathbb{N}^{(\ngt+1)\times(\nd+1)}$. See Figure \ref{fig:M_to_M_star} for an illustration. \\

\noindent \textbf{Properties of $\mathbf{M}^{*}$.} 
The sum of row $i$ of $\mathbf{M}^{*}$ is equal to the cardinality of $\mathcal{G}_{i}$: $\sum_{j=1}^{\nd+1}{M_{i, j}^{*}} = |\mathcal{G}_{i}|$ for $i=1, \dots, \ngt$. 
The sum of column $j$ is equal to the cardinality of $\mathcal{M}_{\pi(j)}$ for $j=1, \dots, \nd$. The bottom right entry $M^{*}_{\nd+1, \ngt+1}$ is undefined and is never used.

\begin{figure}
    \centering
    \begin{subfigure}{0.49\linewidth}
        \centering
        \includegraphics[scale=0.5]{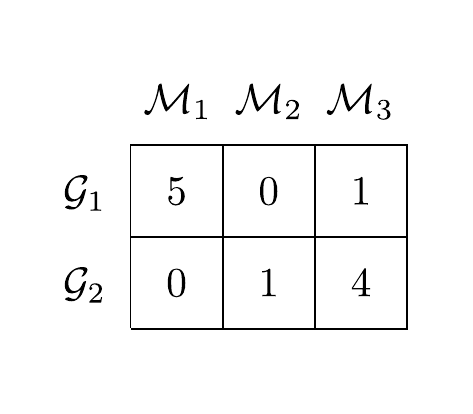}
        \caption{Contingency table $\mathbf{M}$}\label{fig:M}
    \end{subfigure}
    \begin{subfigure}{0.49\linewidth}
        \centering
        \includegraphics[scale=0.5]{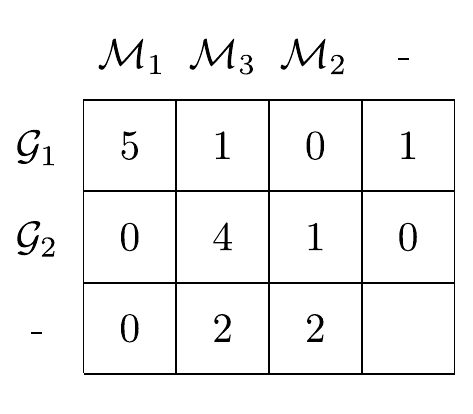}
        \caption{Matching matrix $\mathbf{M}^*$}\label{fig:M_star}
    \end{subfigure}
    \caption{The matching matrix $\mathbf{M}^{*}$ is obtained by permuting the columns of $\mathbf{M}$ with the optimal permutation $\pi$, and adding an extra column and row for unmatched GT and discovered motifs, respectively. In this specific example, $\pi(1) = 1, \pi(2) = 3$, and $\pi(3)=2$.}
    \label{fig:M_to_M_star}
\end{figure}

\subsection{Calculating Precision, Recall and F1-score}\label{sec:precrecf1}

The matching matrix defines $m=\min(d,g)$ matching pairs of motif sets $(\mathcal{G}_i, \mathcal{M}_{\pi(i)})$.  In each such pair, we define the number of \emph{true positives} $(\text{TP}_i)$, \emph{false negatives} $(\text{FN}_i)$ and \emph{false positives} $(\text{FP}_i)$ as follows (Figure \ref{fig:tpfpfn}):
\begin{itemize}
    \item $\text{TP}_i$ is the number of motifs in $\mathcal{G}_i$ for which a matching motif in $\mathcal{M}_{\pi(i)}$ exists. It is equal to the diagonal entry $M^{*}_{i, i}$.
    \item $\text{FN}_i$ is the number of motifs in $\mathcal{G}_i$ for which no matching motif in $\mathcal{M}_{\pi(i)}$ exists (either because they are matched with a segment in a different discovered set, or not at all).  It equals the sum of the off-diagonal entries in row $i$ of $\mathbf{M}^{*}$. 
    \item $\text{FP}_i$ is the number of motifs in $\mathcal{M}_{\pi(i)}$ not matched with any motif in $\mathcal{G}_i$. It equals the sum of the off-diagonal entries in column $i$ of $\mathbf{M}^{*}$.
\end{itemize}
\begin{figure}
\includegraphics[width=\linewidth]{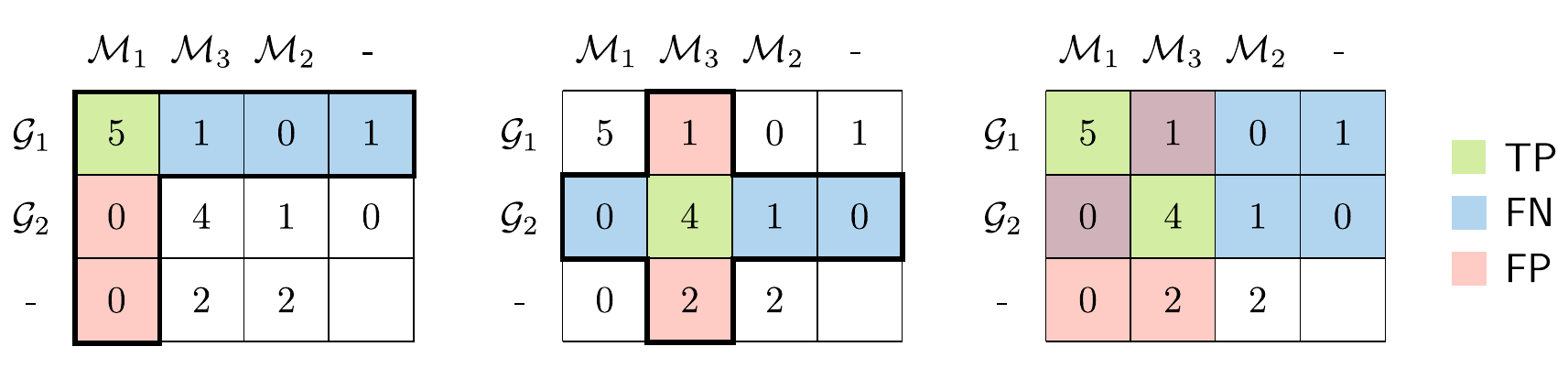}
\caption{An example matching matrix $\mathbf{M}^{*}$ when $\nd>\ngt$, and $\text{TP}_{i}$, $\text{FN}_{i}$, and $\text{FP}_{i}$ for $i=1$ (\emph{Left}) and $i=2$ (\emph{Center}). \emph{Right}: The total number of $\text{TP}$, $\text{FN}$, and $\text{FP}$.}
\label{fig:tpfpfn}
\end{figure}\quad
\begin{figure}
    \centering
    \includegraphics[scale=0.4]{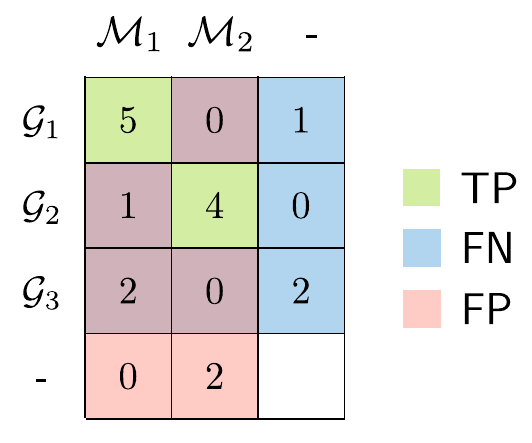}
    \caption{An example matching matrix $\mathbf{M}^{*}$ when $\ngt>\nd$; and the definitions of $\text{TP}$, $\text{FN}$, and $\text{FP}$. 
    }
    \label{fig:add_gt}
\end{figure}
When $g>d$, some GT motif sets are not matched with any discovered motif set. These have only false negatives: $\text{FN}_i = \sum^{\nd+1}_{j=1} M^{*}_{i, j} = k_i$ for $i=\nd+1, \dots, \ngt$. 

Conversely, when $d>g$, some discovered motif sets are not matched with any GT motif set. While this case can be treated symmetrically to $g>d$, it deserves special attention.
Since the GT by definition only includes the patterns of interest, it is possible that there are other patterns in the time series, that are just as real, but different from the GT patterns. Such an \emph{off-target} pattern may be not in the GT because it was unknown beforehand, or simply not of interest. Since these patterns might be conserved even better than the GT patterns, discovering them first should not necessarily be penalized \citep{minnen, balasubramanian}. A typical example of an off-target pattern is a recurring calibration signal in an electrocardiogram: while the heartbeats are of interest, the calibration signal might be conserved better and may be discovered first by TSMD methods \citep{guiding}. 
For this reason, a user might choose to ignore false positive discoveries on the level of motif sets.  This boils down to defining $\text{FP}_j = 0$ for $j=\ngt+1, \dots, \nd$, rather than counting all motifs in the false positive motif sets: $\text{FP}_j = \sum_{i=1}^{\ngt+1} M^{*}_{i,j}$.

The above $\mathrm{FP}_i$, $\mathrm{TP}_i$ and $\mathrm{FN}_i$ are summed up:
\begin{equation}
    \text{TP} = \sum_{i=1}^{\nmin}{\text{TP}_{i}}, 
    \qquad
    \text{FN} = \sum_{i=1}^{\ngt}{\text{FN}_{i}},\ \qquad
    \text{FP} = \sum_{j=1}^{\nd}{\text{FP}_{j}} 
    \label{eq:fp_alt}
\end{equation} 
and precision, recall, and F1-score are defined based on the sums:
\begin{equation}
    \text{P} = \frac{\text{TP}}{\text{TP} + \text{FP}},\qquad
    \text{R} = \frac{\text{TP}}{\text{TP} + \text{FN}}, \qquad 
    \text{F} = 2\frac{ \mathrm{P} \cdot \mathrm{R}}{\mathrm{P} + \mathrm{R}}. 
\end{equation}
We define $\mathrm{F}=0$ when $\mathrm{TP}=0$. Note that these formulas correspond to micro-averaging. We could also use macro-averaging, where P, R and F are calculated separately for each motif set and then averaged. However, in this case, unmatched motif sets require special care, as F is undefined for them. Appendix \ref{appendix:macro-averaging} details macro-averaged PROM.

\section{Proposed Benchmark}\label{sec:benchmarks}
This section introduces TSMD-Bench, our benchmark for TSMD. It consists of 14 benchmark datasets, each consisting of multiple time series constructed from a time series classification dataset. We first describe how we create a benchmark dataset from a classification dataset, and then how we selected suitable classification datasets. 

\subsection{Constructing a TSMD Dataset from a Classification Dataset}\label{sec:benchmarks_construction}

\begin{figure}
    \centering
    \includegraphics[width=\linewidth]{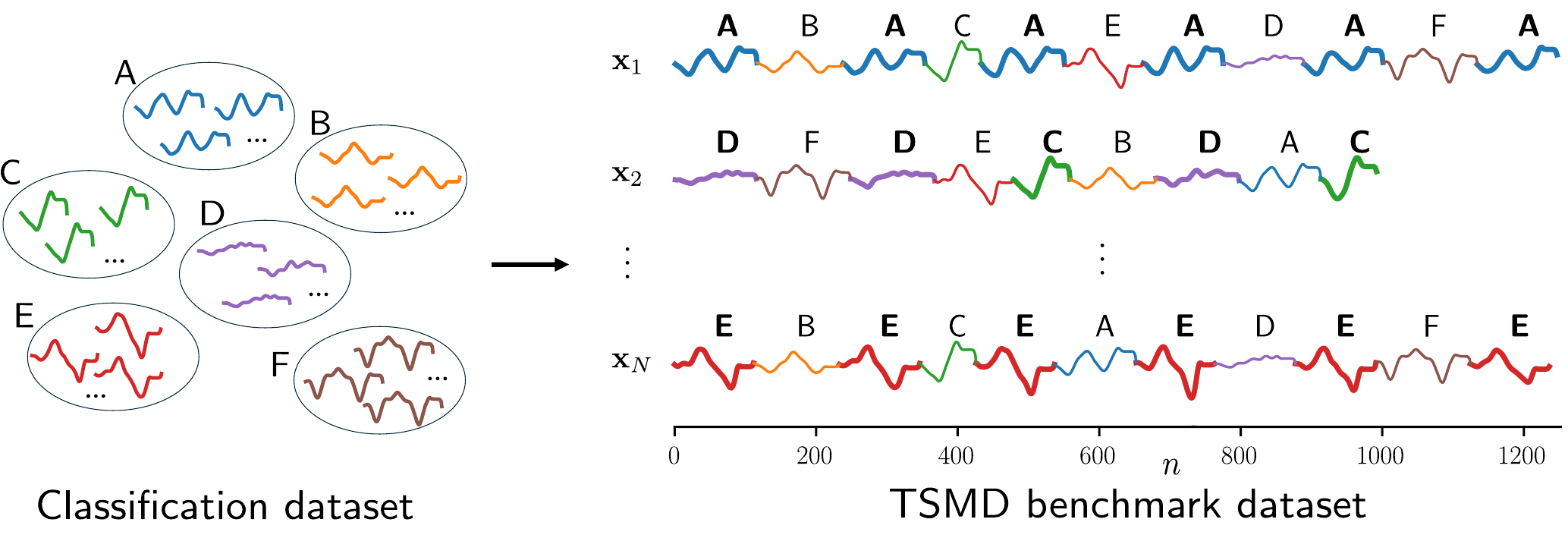}
    \caption{From a classification dataset in which the instances of the same class are similar in shape, we construct a TSMD benchmark consisting of $N$ benchmark time series, for which the GT motif sets (indicated in bold) are available.}
    \label{fig:benchmark-generation}
\end{figure}

Each classification dataset contains multiple relatively short, labeled time series instances. 
We only use classification datasets in which the instances of the same class are similar to each other, and dissimilar to those in other classes (such as the one shown in Figure~\ref{fig:benchmark-generation}). 

Each benchmark time series is a concatenation of classification instances. The instances that belong to the same class constitute a GT motif set. To minimize the number of \emph{off-target} patterns (defined in Section \ref{sec:precrecf1}) in a benchmark time series, we impose a certain structure on them. For example, consider a classification dataset with six classes: \textsf{A}, \textsf{B}, \textsf{C}, \textsf{D}, \textsf{E} and \textsf{F}. If we would concatenate the instances of these classes in a random order, a time series such as \textsf{\textbf{\color{tabblue}AB}CF\textbf{\color{tabblue}AB}DE} could occur, where the pattern \textsf{AB} is additional to the GT patterns \textsf{A} and \textsf{B}. To solve this, we impose a certain structure on each time series we construct: between any two GT motifs, we place a single instance from a class that does not repeat, e.g., \textsf{\textbf{\color{tabblue}A}C\textbf{\color{tabblue}A}F\textbf{\color{taborange}B}D\textbf{\color{tabblue}A}E\textbf{\color{taborange}B}} (Figure~\ref{fig:benchmark-generation}). Note that the imposed structure does not prevent all off-target patterns. For example, an instance of class \textsf{A} could consist of two consecutive parts \textsf{X} and \textsf{Y}, and an instance of class \textsf{B} of \textsf{X} and \textsf{Z}. In this case, any benchmark time series that contains both class \textsf{A} and \textsf{B} contains the pattern \textsf{X}, which is off-target in this case.

To generate a time series with this structure that has $\ngt$ GT motif sets, at least $3\ngt-1$ classes are required. $\ngt$ classes are needed for the GT motif sets. Every GT motif set has at least two motifs, so there are at least $2\ngt$ motifs to be interleaved with an instance of a class that does not repeat, which requires $2\ngt- 1$ different classes. Thus, we need at least $3\ngt-1$ classes in total. For a classification dataset having $c$ classes, the number of GT motif sets is therefore bounded by $\ngt_{\max} = \lfloor \frac{c+1}{3} \rfloor$.

For every benchmark time series, we sample the number of GT motif sets $\ngt$ uniformly from $[1:\ngt_{\max}]$; randomly select the $\ngt$ classes that are used to construct the GT motif sets; and randomly select the instances that are concatenated. \\

\noindent\textbf{Validation and test set.} For each benchmark dataset, we construct a validation set, which can be used to tune the hyperparameters of TSMD methods, and a test set, to evaluate the performance of TSMD methods. To ensure that no classification instance is present in both a validation and test time series, we first split the classification dataset into a validation and a test set, and then use these sets to separately construct the validation and test set of the TSMD benchmark dataset (both use the same validation-test split). Hence, the scheme shown in Figure \ref{fig:benchmark-generation} is applied twice, once for the validation set ($N=50$) and once for the test set ($N=200$).

\subsection{Selecting Suitable Classification Datasets}\label{sec:dataset_selection}

\noindent To construct TSMD-Bench, we make use of the classification datasets from the well-known UCR and UAE Time Series Classification Archive \citep{UCRArchive2018}. 

We select classification datasets based on two criteria. First, we only select the datasets with five or more classes ($c \geq 5$), such that at least two GT motif sets can be generated per benchmark time series ($\ngt_{\max} \geq 2$). Second, as our benchmark construction assumes that the instances of the same class are similar in shape (Section \ref{sec:benchmarks_construction}), we only select the classification datasets whose classes are distinguishable in an unsupervised manner (i.e., easily clusterable). A classification dataset can be categorized based on whether is is fixed- or variable-length and whether it is uni- or multivariate. For the fixed-length, univariate category we use the results of a recent clustering benchmark paper \citep{javed_benchmark_2020}, and select the datasets for which the Adjusted Rand Index (ARI) averaged over multiple distance measures and clustering algorithms is higher than 0.5. Five datasets adhere to this criterion: \textsf{ECG5000}, \textsf{Fungi}, \textsf{Plane}, \textsf{Mallat}, and \textsf{Symbols}. The datasets of the other three categories, we cluster ourselves, using the $k$-medoids algorithm ($k=c$) with Euclidean distance (ED) and Dynamic Time Warping (DTW) distance.\footnote{Strictly speaking, DTW distance is not a distance measure as it does not guarantee the triangle inequality to hold.}\footnote{To apply ED on a variable-length dataset, we resample the instances such that their length is equal to the average length in the dataset.} Figure \ref{fig:clustering_result} shows the obtained ARI values per dataset, and indicates which classification datasets are deemed suitable for our TSMD benchmark (apart from the five datasets mentioned above).

\begin{figure}
    \centering
    \includegraphics[width=\linewidth]{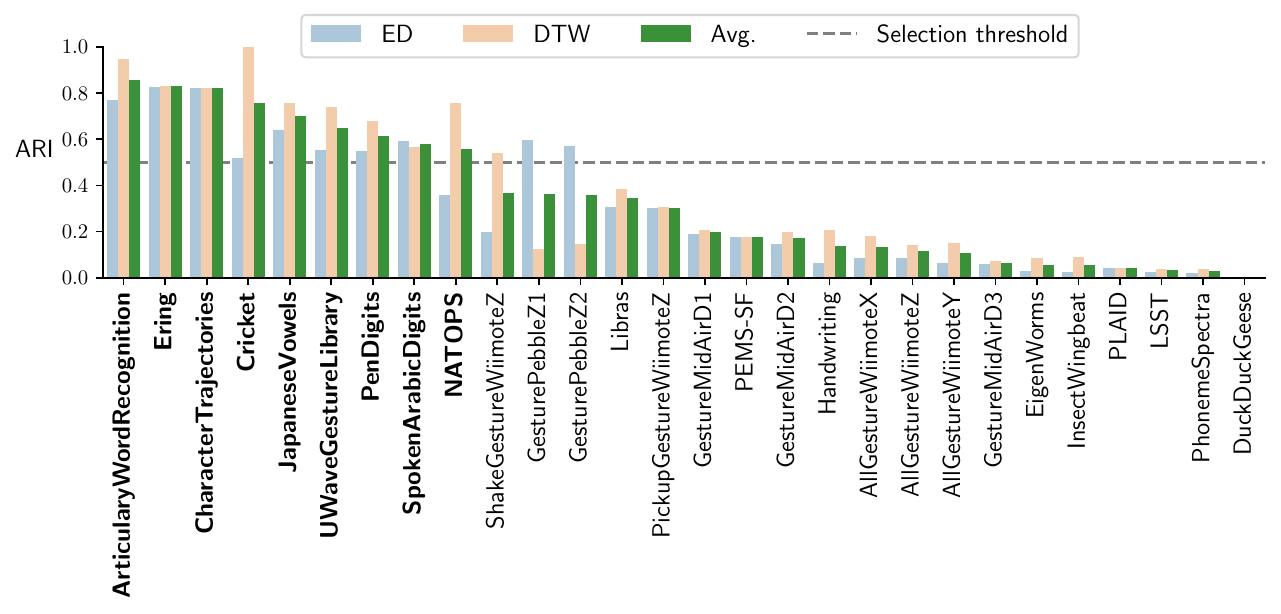}
    \caption{The obtained ARI values by the $k$-medoids algorithm for a subset of UCR and UEA Time Series Classification datasets (more than 5 classes; multivariate, variable-length, or both), using ED and DTW. The datasets we select to construct a TSMD benchmark dataset (average ARI $\geq$ 0.5) are shown in boldface.}
    \label{fig:clustering_result}
\end{figure}

Table \ref{tab:benchmark_overview} shows an overview of our entire TSMD benchmark. It consists of 14 benchmark datasets in total: 11 have fixed-length motifs, 5 of which are univariate and 6 multivariate; the other 3 include variable-length motifs and are multivariate.

\begin{table}[]
\centering
\begin{tabular}{p{4cm}>{\raggedleft\arraybackslash}p{1cm}>{\raggedleft\arraybackslash}p{1cm}>{\raggedleft\arraybackslash}p{1cm}>{\raggedleft\arraybackslash}p{1cm}>{\raggedleft\arraybackslash}p{1cm}>{\raggedleft\arraybackslash}p{1cm}}
\toprule
                 Benchmark Dataset & $c$ & $l_{\min}$ & $l_{\max}$ & $D$ & $\ngt_{\max}$ & $n_{\text{avg}}$  \\ \midrule
                  \textsf{ECG5000} &   5 &  140       &    140     &  1  & 2             &   980             \\ 
                    \textsf{Fungi} &  18 &  201       &    201     &  1  & 6             &  4639             \\

                   \textsf{Mallat} &   8 & 1024       &   1024     &  1  & 3             & 12370             \\
                    \textsf{Plane} &   7 &  144       &    144     &  1  & 2             &  1584             \\ 
                  \textsf{Symbols} &   6 &  398       &    398     &  1  & 2             &  3582             \\ \midrule
\textsf{ArticularyWordRecognition} &  25 &  144       &    144     &  9  & 8             &  3387             \\
                  \textsf{Cricket} &  12 & 1197       &   1197     &  6  & 4             & 14201             \\
                    \textsf{ERing} &   6 &   65       &     65     &  4  & 2             &   585             \\
                   \textsf{NATOPS} &   6 &   51       &     51     & 24  & 2             &   459             \\
                \textsf{PenDigits} &  10 &    8       &      8     &  2  & 3             &   129             \\ 
      \textsf{UWaveGestureLibrary} &  8  &  315	    &    315	 &  3  & 3             &  4077\\ \midrule
    \textsf{CharacterTrajectories} &  20 &   60       &    182     &  3  & 7             &  2759             \\
           \textsf{JapaneseVowels} &   9 &    7       &     29     & 12  & 3             &   166             \\
       \textsf{SpokenArabicDigits} &  10 &    4       &     93     & 13  & 3             &   523             \\
\bottomrule
\end{tabular}
\caption{Overview of the proposed TSMD benchmark. For each of the 14 benchmark datasets, the table includes properties of the respective classification dataset: the number of classes $c$, and min.\ and max.\ length of the instances, $l_{\min}$ and $l_{\max}$, the number of dimensions $D$; as well as the max.\ number of GT motif sets $\ngt_{\max}$ (=$\lfloor \frac{c+1}{3} \rfloor$), and average time series length $n_{\text{avg}}$.}
\label{tab:benchmark_overview}
\end{table}

\section{Experiments}
\noindent Our experiments evaluate the usefulness of PROM and TSMD-Bench. They address three research questions:
\begin{enumerate}[label=\textbf{RQ\arabic*}:, leftmargin=\widthof{[RQ3:]}+\labelsep]
    \item How does PROM compare to the existing metrics? 
    \item What does PROM tell us about the performance of existing TSMD methods?
    \item What is the added value of TSMD-Bench over benchmarks based on random walk?
\end{enumerate}

To answer \textbf{RQ1}, we generate 30 800 TSMD results (($\mathcal{G}, \mathcal{M}$)-pairs) by applying 11 TSMD methods to TSMD-Bench (14 datasets, 200 test time series per dataset). We then investigate how similarly the considered metrics rank the obtained results, and have a closer look at the cases in which the metrics disagree. For \textbf{RQ2}, we reuse the results generated for \textbf{RQ1}, and investigate the PROM (P, R, and F) values obtained by the considered methods. \textbf{RQ3} addresses the claim we have made in Section~\ref{sec:intro} about TSMD benchmarks that use random walk.

\subsection{Comparing PROM to the existing metrics (RQ1)}
\subsubsection{Considered Metrics}\label{sec:considered_metrics}
We consider only the metrics in Table \ref{tab:overview_metrics} that evaluate multiple motif sets of any cardinality: Apart from PROM, we consider the overlap-based metric proposed by \citet{correctness}, called \emph{correctness}, and the \emph{score} metric by \citet{setfinder}. \\

\noindent \textbf{Correctness (C).}  Given GT motif sets $\mathcal{G}$, discovered motif sets $\mathcal{M}$, and a matching $\pi$ that matches $\mathcal{G}_i$ to $\mathcal{M}_{\pi(i)}$ for $i = 1, \dots, m$, where $m = \min(g, d)$, \emph{correctness} is defined as:
\begin{equation}
\text{C}(\mathcal{G}, \mathcal{M}) = \frac{1}{m} \sum_{i = 1}^m \frac{\sum  \{ \text{OR}(\alpha, \beta) \ | \ \beta \in \mathcal{G}_i, \alpha \in \mathcal{M}_{\pi(i)}, \ \text{OR}(\alpha, \beta) > 0.5\}}{|\mathcal{G}_i|}.
\label{eq:correctness}
\end{equation}
Thus, for each match $(\mathcal{G}_i, \mathcal{M}_{\pi(i)})$, the metric considers the overlap rates between each $\beta \in \mathcal{G}_i$ and $\alpha \in \mathcal{M}_{\pi(i)}$ that exceed 0.5, sums them up, and normalizes by $|\mathcal{G}_i|$. The C value is calculated as the average over all $m$ matches. The way $\pi$ is obtained was not described in the original paper; hence, we assume that the metric uses the $\pi$ that maximizes the final correctness value.

This metric has certain shortcomings. First, as mentioned in the Related Work (Section~\ref{sec:rw}), C does not penalize falsely discovered motifs. Second, unlike PROM, C does not assign a unique discovered motif to every GT motif.
If a $\beta \in \mathcal{G}_i$ has multiple $\alpha$'s in $\mathcal{M}_{\pi(i)}$ for which $\mathrm{OR}(\alpha, \beta) > 0.5$, their $\mathrm{OR}$ values are simply accumulated. This rewards detecting the same GT segment multiple times, which is undesirable. It can also cause the value of C to exceed 1. Third, C does not penalize unmatched GT motif sets when $g > d$. Only this shortcoming can be easily alleviated, by defining the correctness of an unmatched GT motif set to be 0. This corresponds to replacing $m$ with $g$ in Eq.~\ref{eq:correctness}. In the remainder of this paper, we use C with and without this modification, and denote them as C$_{g}$ and C$_{m}$, respectively.\\

\noindent \textbf{Score (S).} The \emph{score} metric considers each possible ``two-level'' matching, which is a matching first between the GT and discovered motif sets, and then between the motifs of each pair of matched motif sets. The score value for such a two-level matching is equal to the sum of the differences in start indices between matched motifs and the lengths of the unmatched motifs (either GT or discovered). S obtains the lowest obtainable  value across all possible two-level matchings. A lower score indicates a better result.

The authors do not specify how unmatched motif sets affect S. A reasonable approach is to penalize the unmatched GT motif sets by adding the lengths of their motifs to the score value, and let the user choose whether the same should be done for unmatched discovered motif sets (as we did with PROM, see Section~\ref{sec:precrecf1}). In the remainder of the paper, we use this version of S.

The score metric has three main drawbacks. First, it is not suitable to evaluate variable-length motifs: For a certain GT motif $\beta$, a discovered motif $\alpha$ that starts at the same index is deemed a perfect match, even if $\mathrm{OR}(\alpha, \beta) \approx 0$ due to $\alpha$'s length being completely off. Second, S is highly dependent on the length and the number of GT motifs, which makes it difficult to compare values across datasets. A third drawback is its time complexity: in an illustrative case where $g=d=m$, and each motif set has $k$ motifs, the time complexity of S is $O(m^2k^3+m^3)$, which scales poorly when the number of motifs scales linearly with the length of the time series $n$. PROM's time complexity is $O(km + m^3)$ in this illustrative case, and would remain linear in $n$.

\subsubsection{Relationship with P, R and F}

\begin{figure}
\begin{subfigure}[t]{0.5\linewidth}
    \centering
    \includegraphics[width=0.75\linewidth]{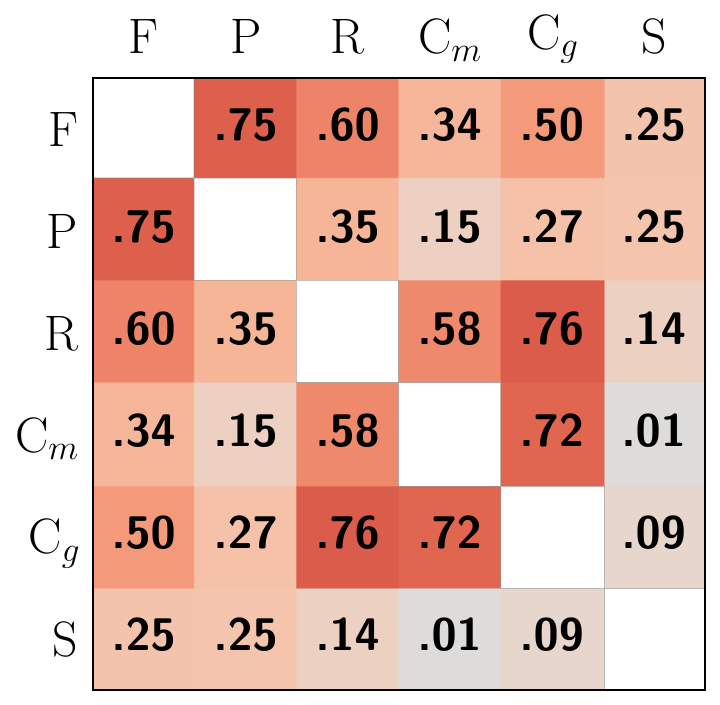}
    \caption{}
    \label{fig:tau_all}
\end{subfigure}
\begin{subfigure}[t]{0.5\linewidth}
    \centering
    \includegraphics[width=0.75\linewidth]{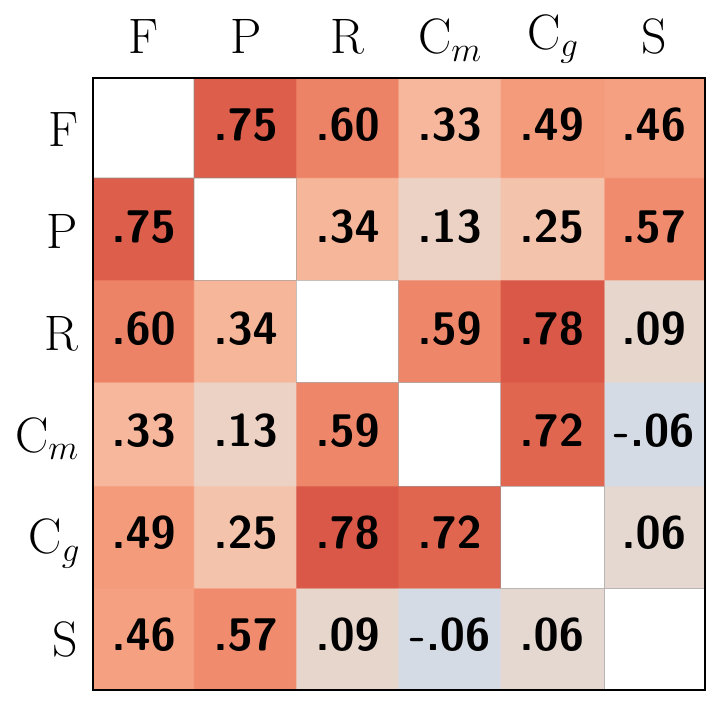}
    \caption{} 
    \label{fig:tau_avg}
\end{subfigure}
\caption{The Kendall rank correlation coefficient $\tau$ between the rankings obtained by the considered evaluation metrics. We calculate $\tau$ over the results of all benchmark datasets combined (a), and the average $\overline{\tau}$ when calculating $\tau$ for each benchmark dataset separately (b). Note that PROM and S were configured to not penalize off-target discovered motif sets.}\label{fig:taus}
\end{figure}

\noindent To assess the relationship between two evaluation metrics, we examine how similarly they rank the obtained results. To this end, we let the metrics rank the results from best to worst, and then calculate the Kendall rank correlation coefficient $\tau$ between their rankings. This coefficient is equal to $1$ when the two metrics rank the results exactly the same, and equal to $-1$ when one ranking is the reverse of the other. We calculate $\tau$ over the results of all benchmark datasets combined, as well as the average $\overline{\tau}$ after calculating $\tau$ separately for each of the 14 benchmark datasets. The latter is useful since the range of S considerably differs across benchmark datasets, and consequently, its ranking of the combined results is not meaningful. Figure \ref{fig:taus} shows the obtained values for each pair of considered evaluation metrics (F, P, R, $\mathrm{C}_m$, $\mathrm{C}_g$, and S).

Since both versions of the correctness metric ($\mathrm{C}_g$ and $\mathrm{C}_m$) do not penalize falsely discovered motifs, they rank the results most similarly to R, rather than F or P (Figure \ref{fig:tau_all}). The example in Figure \ref{fig:example_c_1} illustrates this: both GT motif sets are discovered imprecisely ($\mathrm{P}=0.35$ and $\mathrm{F}=0.52$), but nevertheless both $\mathrm{C}_g$ and $\mathrm{C}_m$ assign a relatively high value ($= 0.92$), similar to the value of R ($=1.00$). As $\mathrm{R}$ and $\mathrm{C}_g$ both measure to what extent the GT motifs are discovered, they produce similar values in most cases. Exceptions occur when the overlap rates between the matched GT and discovered motifs are only slightly higher than 0.5 (Figure \ref{fig:example_c_2}), multiple discovered motifs match to the same GT motif (Figure \ref{fig:example_c_3}), or the metrics obtain a different matching between the GT and discovered motif sets.
$\mathrm{C}_m$ and R differ in more cases as $\mathrm{C}_m$ does not penalize unmatched GT motif sets.
\begin{figure}
\begin{subfigure}[t]{\linewidth}
    \centering
    \includegraphics[width=0.75\linewidth]{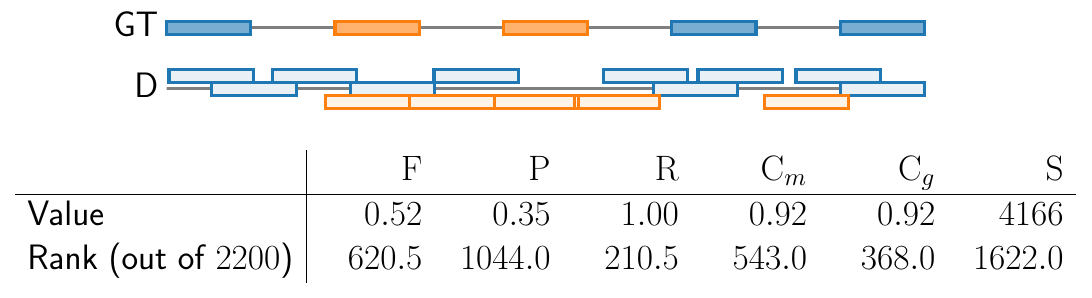}
    \caption{}\label{fig:example_c_1}
\end{subfigure}
\begin{subfigure}[t]{\linewidth}
    \centering
    \includegraphics[width=0.75\linewidth]{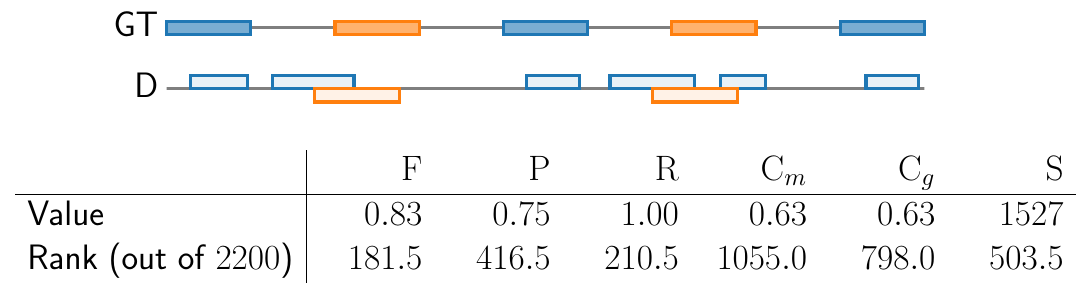}
    \caption{}\label{fig:example_c_2}
\end{subfigure}
\begin{subfigure}[t]{\linewidth}
    \centering
    \includegraphics[width=0.75\linewidth]{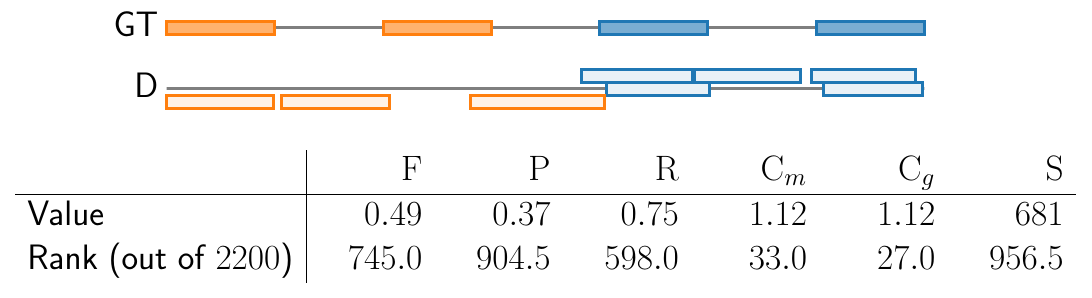}
    \caption{}\label{fig:example_c_3}
\end{subfigure}
\caption{Examples that demonstrate the relationships between PROM (P, R, and F) and the correctness metric ($\mathrm{C}_{m}$ and $\mathrm{C}_{g}$): (a) many false-positive motifs (b) overlap rates only slightly higher than 0.5  (c) multiple discovered motifs match to the same GT motif. \textsf{Value} denotes the performance value that a metric assigns to the result. \textsf{Rank} denotes the rank the metric assigns to the result in the respective benchmark dataset.}\label{fig:examples_disagreement_c}
\end{figure}

For S, only the average $\overline{\tau}$ (Figure \ref{fig:tau_avg}) is relevant, since its ranking of all results combined obtained is not meaningful. As S penalizes both falsely discovered motifs and missed GT motifs, it is expected to rank the results similarly to F.  Somewhat surprisingly, its ranking corresponds only moderately to the ranking of F ($\overline{\tau}=0.46$), and more to the one of P ($\overline{\tau}=0.57$). 
{A possible explanation is that the overall impact of falsely discovered motifs on S can potentially dominate that of missed GT motifs, as a motif of either type contributes equally to the S value, but the former type can be more numerous.} The rank assigned by S is therefore mainly determined by the number of falsely discovered motifs (just like P), rather than the missed GT motifs (just like R). Figure~\ref{fig:example_s_1} illustrates this: even though most of the GT motifs are missed, S and P both rank this result relatively high ($473^{\mathrm{th}}$ and $450^{\mathrm{th}}$, respectively), while F ranks the result low ($1703^{\mathrm{th}}$). The evaluation by S also differs from F and P when there are considerable length differences between the GT and discovered motifs. This is illustrated in Figure~\ref{fig:example_s_2}: PROM does not match the discovered motifs to the GT motifs, resulting in $\mathrm{F}=\mathrm{P}=0$, while S ranks the result high, since it only measures differences between the start points of the motifs. 

We can now answer \textbf{RQ1}. The existing metrics do not evaluate TSMD in all aspects: the correctness metric ($\mathrm{C}_g$ and $\mathrm{C}_m$) only measures the extent to which the GT motifs are discovered (recall), and the score metric (S) mainly measures precision. In contrast, PROM is generally applicable, and provides a complete evaluation by independently measuring recall and precision.

\begin{figure}
\begin{subfigure}[t]{\linewidth}
    \centering
    \includegraphics[width=0.75\linewidth]{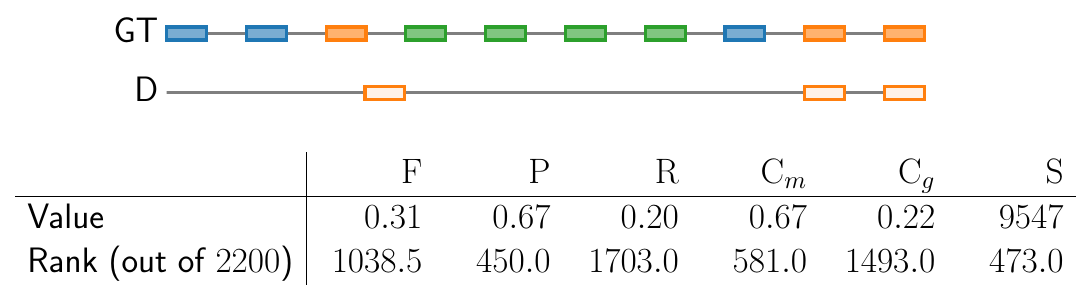}
    \caption{}\label{fig:example_s_1}
\end{subfigure}
\begin{subfigure}[t]{\linewidth}
    \centering
    \includegraphics[width=0.75\linewidth]{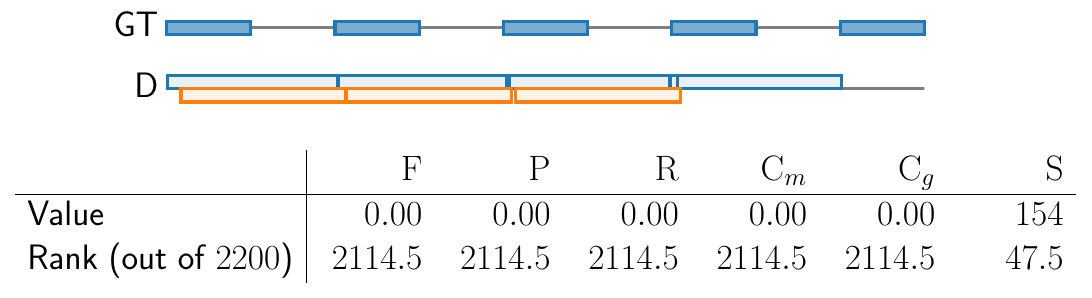}
    \caption{}\label{fig:example_s_2}
\end{subfigure}
\caption{Example results that are ranked high by S, but not by PROM: (a) many missed GT motifs but no false positive motifs, and (b) discovered motifs with similar start indices as the GT motifs but with completely different lengths.}\label{fig:examples_disagreement_s}
\end{figure}

\subsection{Performance values per benchmark dataset (RQ2)}

\subsubsection{Applied methods}\label{sec:methods}
We applied the 11 TSMD methods listed in Table \ref{tab:overview_methods}.  Below, we briefly discuss the properties we use to categorize these methods.\\

\noindent \textbf{Fixed- or variable-length:} whether a method finds only motifs of identical length or not. If not, we say a method finds \emph{variable-length} motifs. We further distinguish \emph{inter}-variable-length motifs, whose lengths can differ across motif sets but not within the same motif set, and \emph{intra}-variable-length motifs, whose lengths can differ even within the same motif set. Note that the variable-length motifs in our benchmark datasets are intra- and not inter-variable-length motifs. \\

\noindent \textbf{Uni- or multivariate:} whether a method is only applicable to univariate time series (time series with a single dimension), or also to time series with multiple dimensions. \\

\noindent \textbf{Symbolic approximation or not:} whether a method first converts the input time series to a symbolic sequence using, e.g., SAX \citep{emma}. Methods that do so are said to be less precise \citep{motiflets}, because the approximations made by the initial conversion step may lead to two identical symbolic subsequences whose corresponding time series subsequences are not alike. 

\begin{table}
\newcommand{\rot}[1]{\rotatebox{90}{#1}}
\centering
\begin{tabular}{p{5cm}p{1.25cm}p{0.125cm}p{0.125cm}p{4.5cm}}
Method                                               & \rot{Variable-length} & \rot{Appl. to multivar.} & \rot{No symbolic approx.} & Hyperparameters \\ \midrule
\textsc{EMMA} \citep{emma}                           & \xmark                                 & \xmark                 & \cmark\tablefootnote{EMMA converts the time series to a symbolic sequence but only for efficiency; that is, it still compares the corresponding time series subsequences.} & $l, r$, $s\in \{4, \dots, \frac{l}{2} \}$, $a \in \{ 2, 4, 8, 16 \}$ \\
\textsc{MrMotif} \citep{mrmotif}                     & \xmark                                 & \xmark                 & \xmark & $l$                                                                                                                              \\
\textsc{SetFinder} \citep{setfinder}                 & \xmark                                 & \xmark                 & \cmark & $l$, $r$                                                                                                                         \\ 
\textsc{LatentMotifs} \citep{grabocka_latent_2017}   & \xmark                                 & \xmark                 & \cmark & $l$, $r$                                                                                                                         \\
\textsc{GV-Sequitur} \citep{grammarviz}              & \cmark \ (intra)                       & \xmark                 & \xmark & $w \in \{8, 16, \dots, 2^{\lfloor \log_{2}{2l_{\max}}\rceil} \}$, $s \in \{ 4, \dots, \frac{w}{2}\}$, $a \in \{ 2, 4, 8, 16\}$.  \\
\textsc{GV-RePair} \citep{grammarviz}                & \cmark \ (intra)                       & \xmark                 & \xmark & Same as \textsc{GV-Sequitur}                                                                                                     \\ 
\textsc{VALMOD} \citep{linardi_matrix_2018}          & \cmark \ (inter)                       & \xmark                 & \cmark & $l_{\min}$, $l_{\max}$, $r_f \in \{2, 3, \dots, 6\}$                                                                             \\
\textsc{STUMPY mmotifs} \citep{stumpy}               & \xmark                                 & \cmark\tablefootnote{Although \textsc{MMotifs} can find \emph{sub-dimensional} motifs (motifs that span a subset of the dimensions), we configure it to only find motifs that span all dimensions as our benchmarks are constructed for this setting.} & \cmark & $l$, $r$ \\
\textsc{VONSEM} \citep{vonsem}                       & \cmark \ (inter)                       & \xmark                 & \cmark & $l_{\min}$, $r$                                                                                                                  \\
\textsc{Motiflets} \citep{motiflets}                 & \xmark                                 & \xmark                 & \cmark & $l_{\min}$, $l_{\max}$, $k_{\max}$                                                                                               \\
\textsc{LoCoMotif} \citep{locomotif}                 & \cmark \ (intra)                       & \cmark                 & \cmark & $l_{\min}$, $l_{\max}$, $\rho \in \{ 0.1, 0.2, \dots, 0.9\}$, $\texttt{warp} \in \{ \text{True}, \text{False} \}$                          \\
\bottomrule
\end{tabular}
\caption{The applied methods, their properties, and their relevant hyperparameters.}\label{tab:overview_methods}
\label{tab:methods_overview}
\end{table}

\subsubsection{Hyperparameters}
For each hyperparameter, we select one value for the whole test set (instead of one value per time series). Each method was allowed to discover $\ngt_{\max}$ (Section \ref{sec:dataset_selection}) motif sets (see Table~\ref{tab:benchmark_overview} for the value per benchmark dataset). The hyperparameters related to overlap are set to their default value. Some hyperparameters are data-dependent and are set directly based on the validation set of either the benchmark dataset or the respective classification dataset. $l$, $l_{\min}$ and $\l_{\max}$ represent the fixed, min. and max. motif length. For fixed-length datasets, they are set to the actual length of the classification instances (Table~\ref{tab:benchmark_overview}). For variable-length datasets, $l$ is set to the mean length, and $l_{\min}$ and $\l_{\max}$ respectively to the $0.1$- and $0.9$-quantile of the lengths.\footnote{To be insensitive to outliers, e.g., $l_{\min}=4$ for the \textsf{SpokenArabicDigits} dataset, see Table~\ref{tab:benchmark_overview}} $k_{\max}$ (maximum motif set cardinality) of \textsc{Motiflets} is set to twice its actual value.\footnote{Because Motiflets uses the elbow method to find suitable values for $k$, we set its value higher than its actual value} $r$ represents the maximum Euclidean distance (ED) between a central motif and any another motif in a fixed-length discovered motif set. It is set as follows: For each class in the classification dataset, we find the instance that minimizes the maximum ED to all other instances in that class (the most ``central'' instance). Then, we set $r$ to the maximum over all classes.\footnote{For variable-length datasets, we first resample the instances to the mean length such that ED can be calculated.} The rest of the hyperparameters (window size $w$, word size $s$, and alphabet size $a$ for the methods that use a symbolic representation; radius factor $r_f$ of \textsc{VALMOD}; strictness $\rho$ and \texttt{warp} of \textsc{LoCoMotif}) have a predefined set of possible values (Table \ref{tab:methods_overview}) and are set to maximize the F1-score on the validation set using exhaustive grid search. For methods only applicable to univariate time series, the best dimension of the multivariate benchmarks to use is also determined through the grid search.

\subsubsection{Results}
\begin{figure}
\begin{subfigure}[t]{\linewidth}\centering
    \includegraphics[width=\linewidth]{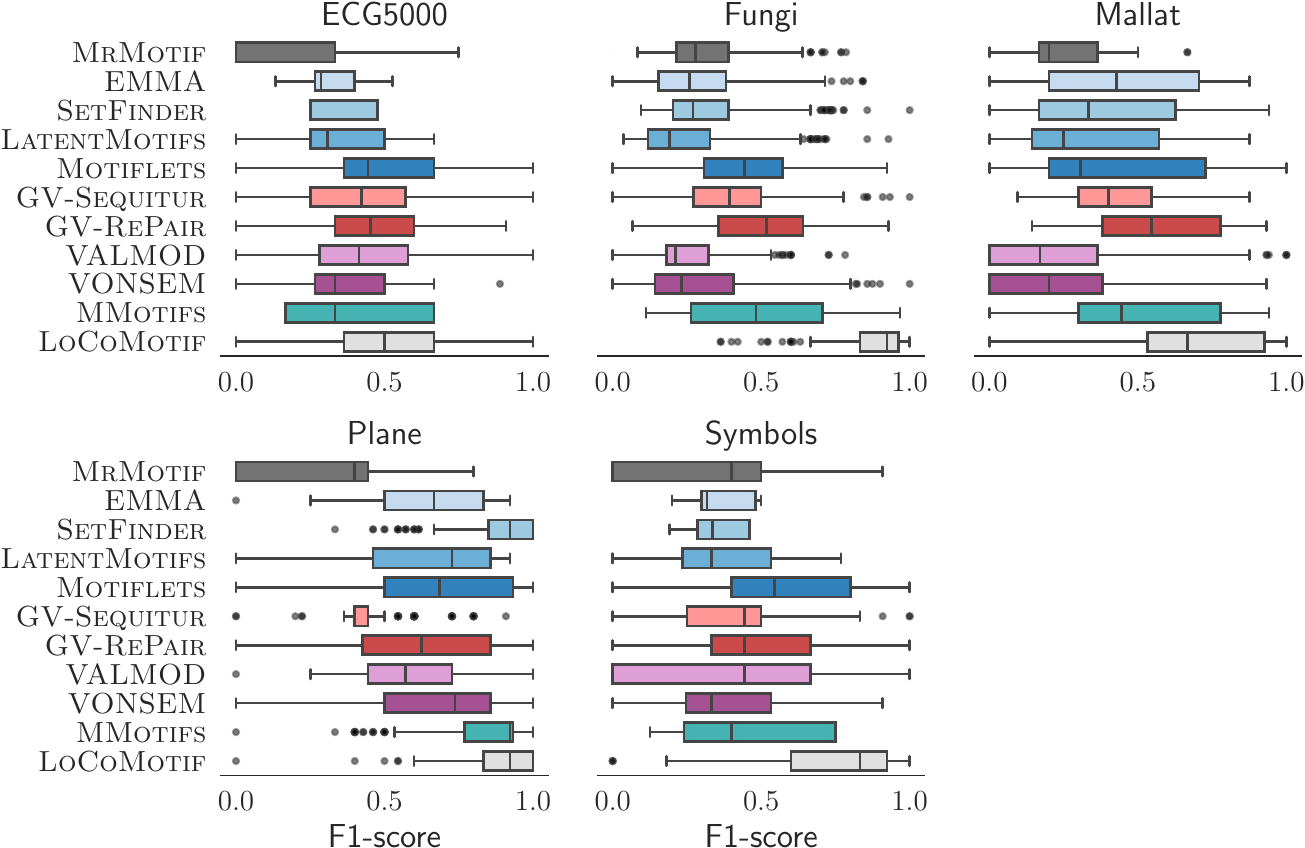}
\caption{}\label{fig:boxplots-a}
\end{subfigure}
\begin{subfigure}[t]{\linewidth}\centering
    \includegraphics[width=\linewidth]{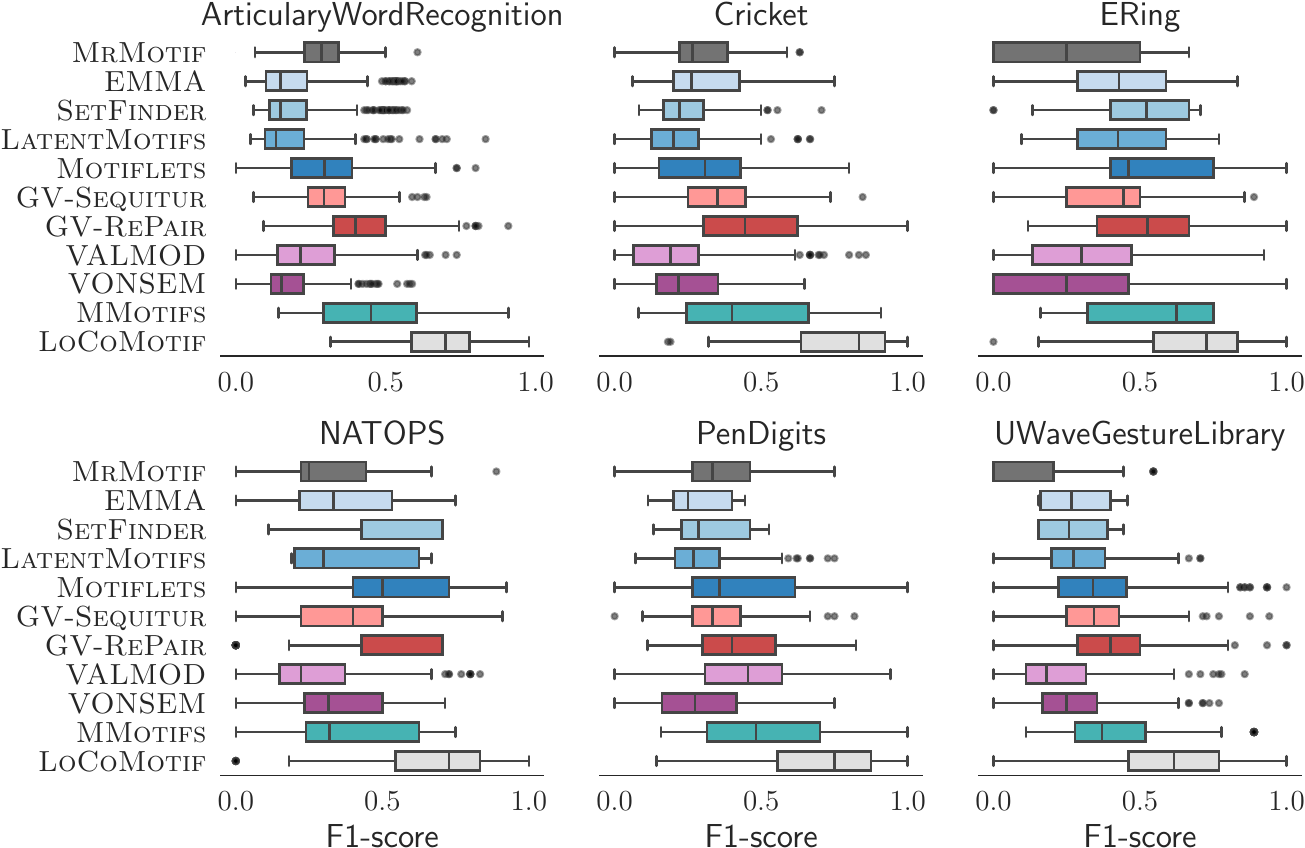}
    \caption{}\label{fig:boxplots-b}
\end{subfigure}
\end{figure}
\begin{figure}\ContinuedFloat
\begin{subfigure}[t]{\linewidth}\centering
    \includegraphics[width=\linewidth]{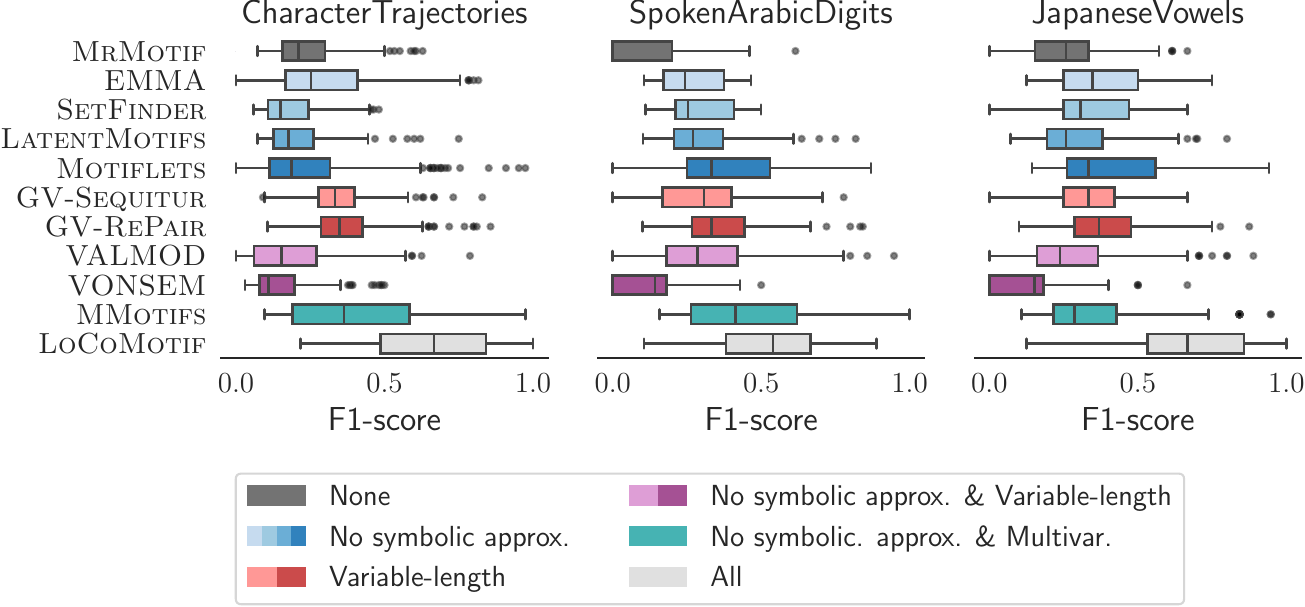}
\caption{}\label{fig:boxplots-c}
\end{subfigure}
\caption{The distribution of the F1-scores obtained by each of the applied methods for each benchmark dataset: (a) fixed-length, univariate datasets (b) fixed-length multivariate, datasets (c) variable-length, multivariate datasets. The hue of a method's box is based on which properties the method has in Table~\ref{tab:overview_methods}.}\label{fig:boxplots}
\end{figure}

Figure~\ref{fig:boxplots} shows, for each benchmark dataset, the distribution of the obtained F1-scores by each method. To answer \textbf{RQ2}, we perform an average rank comparison based on the average F1-score obtained for each dataset. To this end, we construct a critical difference diagram (CDD) using the statistical tests suggested by \cite{benavoli16a} with significance level $\alpha = 0.05$ (Figure \ref{fig:cdd-f}). Over all benchmark datasets, \textsc{LoCoMotif} statistically outperforms the other methods, achieving an average rank almost equal to 1. This is somewhat expected, as it checks all boxes in Table~\ref{tab:overview_methods}. It is followed by \textsc{MMotifs}, \textsc{GV-RePair}, and \textsc{Motiflets}, which all have an average rank around 3.5, but do not statistically outperform the other methods. Somewhat surprisingly, \textsc{VALMOD} and \textsc{VONSEM} are at the lower end of the ranking, despite them having 2 out of 3 capabilities in Table~\ref{tab:overview_methods}. This may be due to the absence of inter-variable-length motifs in our benchmark, which they are specialized for. 

We also construct a CDD for precision (Figure \ref{fig:cdd-p}) and recall (Figure \ref{fig:cdd-r}). From these diagrams, we observe that \textsc{LoCoMotif} distinguished itself from the rest in terms of F1-score by achieving higher precision, rather than a higher recall. Notable methods are \textsc{SetFinder}, which performed relatively well in terms of recall but not in terms of precision; and \textsc{GV-Sequitur}, for which the opposite holds. The CDD of precision (Figure \ref{fig:cdd-r}) also shows that the methods that use a symbolic approximation do not necessarily perform worse in terms of precision: \textsc{MrMotif}, \textsc{GV-Sequitur}, and \textsc{GV-RePair} perform relatively well in this CDD, and better than in the CDD of F1-score. This is in conflict with what is sometimes claimed about them \citep{motiflets, vonsem}. 

\begin{figure}
    \begin{subfigure}[t]{\linewidth}
    \centering
    \includegraphics[width=0.7\linewidth]{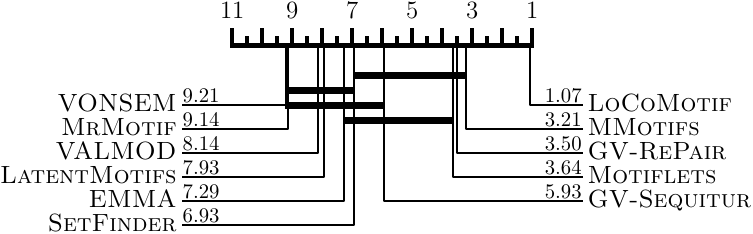}
        \caption{}\label{fig:cdd-f}
    \end{subfigure}
    \begin{subfigure}[t]{0.49\linewidth}
        \includegraphics[width=1\linewidth]{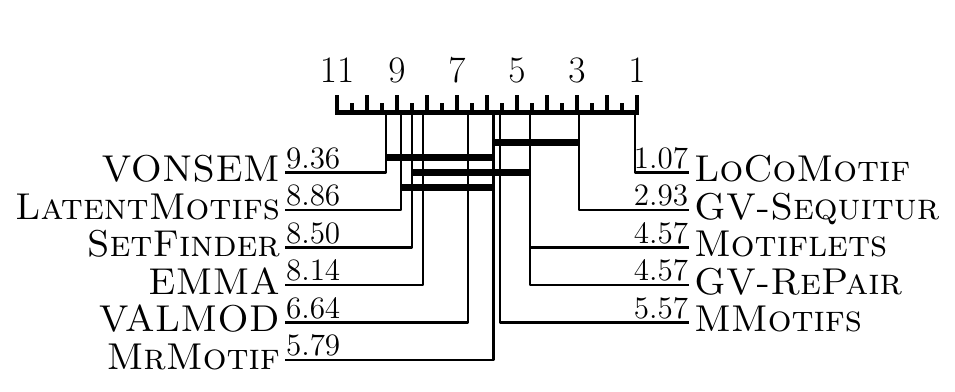}
        \caption{}\label{fig:cdd-p}
    \end{subfigure}\hfill
    \begin{subfigure}[t]{0.49\linewidth}
        \includegraphics[width=1\linewidth]{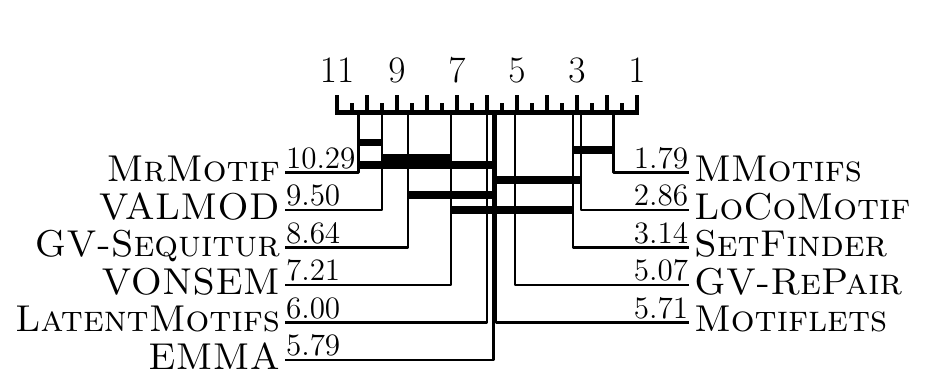}
        \caption{}\label{fig:cdd-r}
    \end{subfigure}
    \caption{Critical difference diagram based on the average (a) F (b) P, and (c) R obtained for each benchmark dataset.}
\end{figure}

\subsection{Showing that benchmarks based on random-walk are too easy (RQ3)}
\noindent In the Introduction (Section~\ref{sec:intro}) we argued that TSMD benchmarks that use random-walk data to interleave the GT motifs are trivial to solve, and therefore not representative of realistic TSMD tasks. Here, we show this by proposing a simple method that solves such benchmarks. Given a time series, the method identifies the time segments that include random-walk data, and returns the remaining time segments as motifs. 

The method makes use of the fact that the first difference ($r_{t+1} - r_{t}$) of a random-walk time series $\mathbf{r}$ is \emph{stationary} (i.e., its statistical properties such as mean, variance, etc. are all constant over time), whereas for structured time series classification instances this is typically not the case. Over a sliding window, the method calculates the first difference of the time series in the window and performs a Dickey-Fuller test \citep{dickeyfullertest} to determine whether the result is non-stationary ($H_0$) or stationary ($H_1$). If it is found to be stationary ($H_0$ rejected), the time series in the window is considered to be a random walk (rather than a classification instance). The windows without random walk are combined and returned as motifs.

To show that this simple method works, we generate 20 time series that consist of Gaussian random walk, and insert 2 to 8 instances from the \textsf{Symbols} dataset at random locations in each of them (see Figure~\ref{fig:rw-example} for an example). We then apply our simple method, with the window size equal to half the length of a \textsf{Symbols} instance and a significance level of 0.05. The method achieves an average F1-score of 0.98, which suggest that these type of benchmarks are indeed too easy, and that TSMD-Bench provides added value over them (answering \textbf{RQ3}).

\begin{figure}
    \centering
    \includegraphics[width=\linewidth]{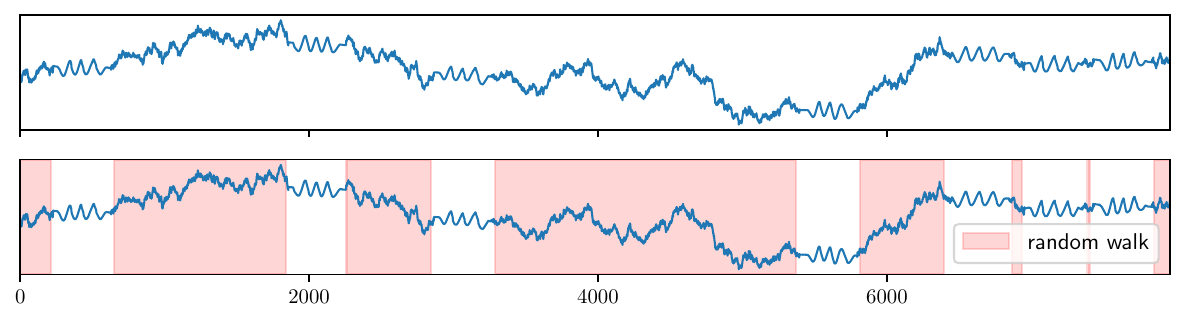}
    \caption{A random walk time series into which time series classification instances from the \textsf{Symbols} dataset are inserted. Such a time series is trivially solved by detecting the random walk segments, and returning the undetected segments as motifs.}
    \label{fig:rw-example}
\end{figure}

\section{Conclusion}
We have proposed PROM, a method to evaluate TSMD in a quantitative manner. Contrary to the existing metrics, PROM does not assume a specific TSMD setting, and measures both the recall and precision of the discovered motif sets with respect to the ground truth. Next to PROM, we proposed the benchmark TSMD-Bench in which ground truth is available.

We used PROM and TSMD-Bench to experimentally show that the existing evaluation metrics are biased towards either precision or recall. In addition, we demonstrated that PROM, combined with the benchmark, enables a fair and systematic comparative evaluation of TSMD methods, whereas no existing metric allowed such comparisons before. Finally, we showed that benchmarks with random-walk are too easy. \\ 

\noindent \textbf{Acknowledgements} This research received funding from the Flemish Government under the ``Onderzoeksprogramma Artificiële Intelligentie (AI) Vlaanderen'' programme and the VLAIO ICON-AI Conscious project (HBC.2020.2795).

\bibliography{references}
\newpage 

\begin{appendices}
\renewcommand{\thefigure}{\arabic{figure}}

\section{Macro-averaged P, R and F}\label{appendix:macro-averaging}
By default, PROM uses micro-averaging, which gives more weight to motif sets with a greater cardinality. If each motif set should have an equal weight, macro-averaging should be used instead.
In this case, $\mathrm{P}$ and $\mathrm{R}$ are calculated for each motif set separately:
\[
\text{P}_j = \frac{\text{TP}_j}{\text{TP}_j + \text{FP}_j}, \qquad
\text{R}_i = \frac{\text{TP}_i}{\text{TP}_i + \text{FN}_i}
\]
and then averaged:
\begin{equation*}
    \text{R} = \frac{\sum^{\ngt}_{i=1} \text{R}_i}{\ngt},
\end{equation*} 
\begin{equation*}
    \text{P} = \frac{\sum^{\nmin}_{j=1} \text{P}_j}{\nd} \ \ \text{or} \ \ \text{P} = \frac{\sum^{\nmin}_{j=1} \text{P}_j}{\nmin}
\end{equation*}
where the left formula is relevant if off-target motif sets are to be penalized ($\mathrm{P}_j$ of an off-target motif set is 0), and the right formula is relevant if they are not. $\mathrm{F}$ requires special care: $\mathrm{F}_i$ is only defined for matched motif sets (only for them both $\mathrm{P}_{i}$ and $\mathrm{R}_i$ are defined), but averaging $\mathrm{F}_i$ over only the matched motif sets does not penalize unmatched GT motif sets when $g>d$, and does not provide the option to penalize off-target motif sets when $d>g$. Instead, we define $\mathrm{F}_i = 0$ for unmatched GT motif sets and $\mathrm{F}_j = 0$ for unmatched discovered motif sets, and macro-averaged F as

\begin{equation*}
    \text{F} = \frac{\sum^{\ngt}_{i=1} \text{F}_i}{\ngt}.
\end{equation*}
when $g > d$, and 
\begin{equation*}
    \text{F} = \frac{\sum^{\nd}_{j=1} \text{F}_j}{\nd} \ \ \text{or} \ \ \text{F} = \frac{\sum^{\nmin}_{j=1} \text{F}_j}{\nmin}.
\end{equation*}
when $d > g$, depending on whether the unmatched discovered motif should be penalized or not.

\end{appendices}
\end{document}